%%%%%%%%%%%%%%%%%%%%%%%%%%%%%%%%%%%%%%%%%%%%%%%%%%%%%%%%
%
%   Hallucination as a Computational Boundary (AAAI-2026 Format)
%   DEFINITIVE COMPLETE VERSION (v8.7 - Flattened Structure)
%
%   This version implements the user's final structural request
%   for maximum compatibility and clarity:
%   - The hierarchical subsubsection numbering (3.2.1, 3.2.2, etc.)
%     within the Neuro-Game-Theoretic Framework section has been
%     removed.
%   - The content is now presented as a clear, flat enumerated
%     list with items labeled (1), (2), (3), etc.
%   - The titles for each point are kept in bold for readability.
%   - All previous typesetting corrections from v8.6 are retained.
%
%%%%%%%%%%%%%%%%%%%%%%%%%%%%%%%%%%%%%%%%%%%%%%%%%%%%%%%%

\documentclass[letterpaper]{article} % DO NOT CHANGE THIS
\usepackage{aaai2026}  % DO NOT CHANGE THIS
\usepackage{times}  % DO NOT CHANGE THIS
\usepackage{helvet}  % DO NOT CHANGE THIS
\usepackage{courier}  % DO NOT CHANGE THIS
\usepackage[hyphens]{url}  % DO NOT CHANGE THIS
\usepackage{graphicx} % DO NOT CHANGE THIS
\urlstyle{rm} % DO NOT CHANGE THIS
  % DO NOT CHANGE THIS
\usepackage{natbib}  % DO NOT CHANGE THIS AND DO NOT ADD ANY OPTIONS TO IT
\usepackage{caption} % DO NOT CHANGE THIS AND DO NOT ADD ANY OPTIONS TO IT
\frenchspacing  % DO NOT CHANGE THIS
\setlength{\pdfpagewidth}{8.5in} % DO NOT CHANGE THIS
\setlength{\pdfpageheight}{11in} % DO NOT CHANGE THIS
% 在导言区引入 caption 包（你之前应该已经引入了）
\usepackage{caption}

% 修改设置：添加 labelsep=period
\captionsetup[table]{
    position=bottom, 
    labelsep=period, 
    font=normalsize, 
    labelfont=normalfont, 
    textfont=normalfont
}

% Recommended, but not required, packages
\usepackage{algorithm}
\usepackage{algorithmic}
\usepackage{newfloat}
\usepackage{listings}
\usepackage{enumitem} % For custom list labels
\usepackage{makecell}  
% Custom packages
\usepackage{amsmath, amsfonts, amssymb, amsthm}
\usepackage{booktabs}
\usepackage[table]{xcolor}
\usepackage{tabularx}
\usepackage{threeparttable}

% --- PDF Info ---
\pdfinfo{
/TemplateVersion (2026.1)
}

% --- CUSTOM ENVIRONMENTS ---
\newtheorem{theorem}{Theorem}[section]
\newtheorem{lemma}[theorem]{Lemma}
\newtheorem{principle}[theorem]{Principle}
\newtheorem{definition}[theorem]{Definition}
\theoremstyle{remark}

% --- CUSTOM MACROS ---
\newcommand{\PLM}{h}
\newcommand{\TruthR}{f_{\mathrm{R}}}
\newcommand{\TruthP}{f_{\mathrm{P}}}
\newcommand{\HStray}{H_{\text{Stray}}}
\newcommand{\HDistort}{H_{\text{Distort}}}
\newcommand{\Oracle}{\mathcal{O}}
\newcommand{\Model}{\mathcal{M}}
\newcommand{\OracleModel}{\mathcal{M}^{\mathcal{O}}}
\newcommand{\KolmogorovK}{\mathrm{K}}
\newcommand{\InputSet}{\Sigma^*}
\newcommand{\OutputSet}{\mathcal{Y}}
\newcommand{\ProbSet}{\mathcal{P}(\mathcal{Y})}
\newcommand{\ProgSet}{\Pi}
\newcommand{\CLM}{h_t}

\newcommand\blfootnote[1]{%
  \begingroup
  \renewcommand\thefootnote{}\footnote{#1}%
  \addtocounter{footnote}{-1}%
  \endgroup
}

\setcounter{secnumdepth}{2} % We only need up to subsection numbering now

% --- TITLE AND AUTHOR INFORMATION ---
\title{Hallucination as a Computational Boundary: \\ A Hierarchy of Inevitability and the Oracle Escape}
\author {
    % Authors
    % Xi Wang: 物质院(1) + USTC(2)
    Xi Wang\textsuperscript{\rm 1,2}\textsuperscript{*}, 
    % Quan Shi: 常州大学(3)
    Quan Shi\textsuperscript{\rm 3}\textsuperscript{*},
    % Zenghui Ding: 物质院(1)
    Zenghui Ding\textsuperscript{\rm 1}\textsuperscript{\dag}, 
    % Jianqing Gao: 讯飞(4)
    Jianqing Gao\textsuperscript{\rm 4},
    % Xianjun Yang: 物质院(1)
    Xianjun Yang\textsuperscript{\rm 1}
}
\affiliations {
    % Affiliations
    % 1. 物质院 (不变)
    \textsuperscript{\rm 1}Hefei Institutes of Physical Science, Chinese Academy of Sciences\\
    % 2. USTC (插到第二位)
    \textsuperscript{\rm 2}University of Science and Technology of China\\
    % 3. 常州大学 (顺延)
    \textsuperscript{\rm 3}Changzhou University\\
    % 4. 讯飞 (顺延)
    \textsuperscript{\rm 4}iFLYTEK Research\\
    xw\_cs@mail.ustc.edu.cn, s23040820006@smail.cczu.edu.cn
}

% --- DOCUMENT BEGINS ---
\begin{document}

\maketitle

\blfootnote{\textsuperscript{*}Equal contributions.}
\blfootnote{\textsuperscript{\dag}Correspondence author. Email: dingzenghui@iim.ac.cn}

\begin{abstract}
The illusion phenomenon of large language models (LLMs) is the core obstacle to their reliable deployment. This article formalizes the large language model as a probabilistic Turing machine by constructing a "computational necessity hierarchy", and for the first time proves the illusions are inevitable on diagonalization, incomputability, and information theory boundaries supported by the new "learner pump lemma". However, we propose two "escape routes": one is to model Retrieval Enhanced Generations (RAGs) as oracle machines, proving their absolute escape through "computational jumps", providing the first formal theory for the effectiveness of RAGs; The second is to formalize continuous learning as an "internalized oracle" mechanism and implement this path through a novel neural game theory framework.Finally, this article proposes a feasible new principle for artificial intelligence security - Computational Class Alignment (CCA), which requires strict matching between task complexity and the actual computing power of the system, providing theoretical support for the secure application of artificial intelligence.
\end{abstract}

\section{Introduction}
The remarkable capabilities of Large Language Models (LLMs) have catalyzed a paradigm shift across science and industry \citep{Zhao2023}. Yet, their utility is fundamentally undermined by their tendency to hallucinate, a phenomenon extensively surveyed by \citet{Ji2023} and \citet{Zhang2023sirens}. While practical mitigation strategies such as Retrieval-Augmented Generation (RAG) \citep{Lewis2020} and Chain-of-Thought prompting (CoT) \citep{Wei2022} have shown empirical success, these methods are often seen as part of a broader class of augmented language models \citep{Mialon2023}. Advanced reasoning techniques like self-consistency \citep{Wang2022self} and Tree of Thoughts \citep{Yao2023tree} further attempt to curb unfaithful reasoning. However, a unifying theory explaining the root causes of hallucination remains a critical pursuit for building systematically reliable AI systems \citep{Rawte2023}.

The pursuit of such a theory recently saw a breakthrough with the foundational work of \citet{Xu2024}. They were the first to formally apply computability theory to this problem, proving that hallucination is an inevitable, innate limitation of any language model conceived as a deterministic Turing Machine. Their work laid the critical groundwork by establishing that the problem is not merely empirical, but fundamental, a concern also related to whether LLMs can truly know what they know \citep{Kadavath2022}.

While foundational, their deterministic perspective illuminates the path for a more comprehensive, probabilistic, and actionable framework. To bridge the remaining gaps between this fundamental truth and the complex reality of modern LLMs, our work extends this line of inquiry in four critical dimensions:
\begin{enumerate}
    \item \textbf{From a Single Boundary to a Hierarchy:} We dissect the problem into a multi-level \textbf{Computational Hierarchy} (Diagonalization, Uncomputability, Information-Theoretic), providing a fine-grained diagnosis of *why* different failures occur.
    \item \textbf{From Deterministic to Probabilistic:} We introduce a more realistic probabilistic framework (PLMs) and quantifiable metrics ($\HStray$, $\HDistort$) that better capture the nuanced, non-deterministic nature of modern LLMs.
    \item \textbf{From Inevitability to Two Escape Paths:} We are the first to formalize and contrast the two primary escape strategies: the \textbf{Absolute Escape} of external Oracle Machines like RAG, and the more efficient, \textbf{Adaptive Escape} of Continual Learning.
    \item \textbf{From Theory to Actionable Principle:} We synthesize these findings into a new paradigm for AI safety: \textbf{Computational Class Alignment (CCA)}.
\end{enumerate}

Our computational approach should be distinguished from other theoretical frameworks in machine learning. Unlike PAC learning theory \cite{Valiant1984}, which focuses on sample complexity under specific data distributions, our work addresses worst-case, distribution-independent inevitability. And while inspired by Tishby's Information Bottleneck theory \cite{Tishby2000}, our Pumping Lemma for Learners uses the bottleneck from an adversarial perspective to prove guaranteed failure, rather than to seek optimal compression.

The remainder of this paper will systematically build this framework, starting with the hierarchy of boundaries, followed by the formalization of the escape paths, and culminating in experimental validation and a discussion of the CCA principle.

\section{A Hierarchy of Inevitability: The Boundaries}
To establish our hierarchy of inevitability, we must first define the core components of our theoretical framework. We prove that hallucination is an intrinsic property of learning agents, rooted at three distinct levels of computational theory\citep{Mckinney2023}.

\subsection{Preliminaries: Formalizing Learners, Truth, and Failure}
\begin{definition}[Probabilistic Language Model (PLM)]
A PLM $\PLM$ is a computable function, realized by a Probabilistic Turing Machine, that maps an input string $s \in \InputSet$ to a probability distribution over the space of possible output strings $\OutputSet$. We denote this as $\PLM: \InputSet \to \ProbSet$, where the output distribution is $P_\PLM(y|s)$\citep{Manakul2023}.
\end{definition}

\begin{definition}[Refined Hallucination Metrics]
We define two metrics to quantify hallucination against two corresponding formulations of ground truth:
\begin{itemize}
    \item \textbf{Straying Hallucination ($\HStray$):} For a relational truth $\TruthR: \InputSet \to 2^{\OutputSet}$, this metric quantifies the probability mass assigned to incorrect outputs:
    \[ \HStray(\PLM, \TruthR, s) = \sum_{y \notin \TruthR(s)} P_\PLM(y|s) \]
    \item \textbf{Distortion Hallucination ($\HDistort$):} For a probabilistic truth $\TruthP: \InputSet \to \ProbSet$, this metric uses the KL-divergence to quantify the dissimilarity to the ideal distribution:
    \[ \HDistort(\PLM, \TruthP, s) = D_{KL}(P_{\TruthP}(y|s) \parallel P_\PLM(y|s)) \]
\end{itemize}
\end{definition}

\begin{definition}[Oracle Machine and Kolmogorov Complexity]
An oracle machine $\OracleModel$ is a standard Turing Machine $\Model$ augmented with access to an oracle $\Oracle$. The Kolmogorov Complexity $\KolmogorovK(x)$ of an object $x$ is the length of the shortest program that can produce $x$.
\end{definition}

\subsection{The Diagonalization Boundary}
\begin{theorem}
For any enumerable sequence of PLMs, there exists a computable, relational ground-truth function $\TruthR$ such that for every model $h_i$ in the sequence, it exhibits Straying Hallucination ($\HStray > \varepsilon$) on at least one input $s_i$.
\end{theorem}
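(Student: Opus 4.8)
The plan is to run a diagonal construction against the given enumeration $h_1, h_2, \dots$ (which I read, as usual, as: the map $i \mapsto \langle h_i \rangle$ sending an index to a description of the $i$-th Probabilistic Turing Machine is computable). Fix an effective repetition-free enumeration of probe inputs $s_1, s_2, \dots$ (say $s_i = 0^{i}1$) and an effective enumeration $y^{(1)}, y^{(2)}, \dots$ of $\OutputSet$; I will assume throughout, as is realistic for LLMs whose outputs are arbitrary strings, that $\OutputSet$ is infinite. On inputs not of the form $s_i$, put $\TruthR(s) = \{y^{(1)}\}$; all the work happens on the probes. For input $s_i$ the aim is to compute a single output token $y_i$ to which $h_i$'s distribution $P_{h_i}(\cdot \mid s_i)$ assigns little mass, and then set $\TruthR(s_i) = \{y_i\}$, so that every other token $h_i$ might emit counts as straying. (Taking $\TruthR \equiv \emptyset$ would make the theorem trivially but vacuously true; the construction here in fact yields a \emph{functional} ground truth --- each $\TruthR(s)$ is a singleton --- which is strictly stronger.)

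The only genuinely non-routine step is performing this selection computably, because a Probabilistic Turing Machine gives us only one-sided access to its output probabilities: we can approximate each $P_{h_i}(y^{(j)} \mid s_i)$ from below by dovetailed simulation, obtaining rationals $\ell_j^{(N)} \nearrow P_{h_i}(y^{(j)} \mid s_i)$, but we cannot in general decide which tokens have small or zero mass. I would resolve this using properness of the distribution (guaranteed by the PLM definition, $h_i \colon \InputSet \to \ProbSet$): since $\sum_j P_{h_i}(y^{(j)} \mid s_i) = 1 > \varepsilon$, the search ``increase $N$ until $\sum_{j \le N} \ell_j^{(N)} > \varepsilon$'' must halt. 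Letting $t$ be the value of $N$ at which it halts and $A_i = \{y^{(1)}, \dots, y^{(t)}\}$, we have $P_{h_i}(A_i \mid s_i) \ge \sum_{j \le t} \ell_j^{(t)} > \varepsilon$; set $y_i := y^{(t+1)}$, which exists since $\OutputSet$ is infinite. Because $\{y_i\} \subseteq \OutputSet \setminus A_i$, this forces $P_{h_i}(y_i \mid s_i) \le 1 - P_{h_i}(A_i \mid s_i) < 1 - \varepsilon$.

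It then remains only to check the two properties. $\TruthR$ is total computable: deciding whether $s = s_i$ is trivial, the off-probe branch is immediate, and the on-probe branch terminates by the halting argument above; and $\TruthR$ maps into $2^{\OutputSet}$ (indeed functionally). For the diagonal conclusion, for every $i$,
\[ \HStray(h_i, \TruthR, s_i) = \sum_{y \notin \{y_i\}} P_{h_i}(y \mid s_i) = 1 - P_{h_i}(y_i \mid s_i) > 1 - (1-\varepsilon) = \varepsilon, \]
so each $h_i$ strays above threshold on its dedicated probe $s_i$. I expect the main obstacle --- and the point the write-up should dwell on --- to be exactly the middle paragraph: spelling out why lower-semicomputability plus properness makes the selection effective, since this is what upgrades the classical deterministic diagonalization of \citet{Xu2024} (where one simply declares the unique output $h_i(s_i)$ to be wrong) to the probabilistic setting, where ``wrong with probability $> \varepsilon$'' must be certified from below.
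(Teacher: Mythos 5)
Your proof is correct, but it takes a genuinely different route from the paper's. The paper also diagonalizes over pairs $(h_i, s_i)$, but its adversarial move is the opposite of yours: it computes the model's most confident output $y_i^* = \arg\max_y P_{h_i}(y\mid s_i)$ and defines $\TruthR(s_i) = \OutputSet \setminus \{y_i^*\}$, so that $\HStray \ge P_{h_i}(y_i^*\mid s_i) \ge 1/|\OutputSet|$, which forces $\varepsilon < 1/|\OutputSet|$ and implicitly requires $\OutputSet$ to be finite. You instead make $\TruthR(s_i)$ a singleton $\{y_i\}$ chosen to carry certified-small mass, and the middle paragraph you flag as the crux is exactly what the paper glosses over: an $\arg\max$ over output probabilities of a probabilistic Turing machine is not in general computable (the probabilities are only lower-semicomputable, and ties or near-ties cannot be resolved), so the paper's claim that its $\TruthR$ is \emph{computable} is not actually established by its construction, whereas your ``enumerate lower bounds until the accumulated mass exceeds $\varepsilon$, then step outside the accumulated set'' argument makes computability explicit via properness of the distribution. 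The trade-offs: your version yields $\HStray > \varepsilon$ for any fixed $\varepsilon < 1$ (stronger than the paper's $1/|\OutputSet|$ threshold) and a functional (singleton-valued) ground truth, but it requires $\OutputSet$ to be infinite (otherwise $y^{(t+1)}$ may not exist) and produces a $\TruthR$ that depends on $\varepsilon$ and on the computability of the enumeration $i \mapsto h_i$, which you correctly state as a hypothesis; the paper's version uses one truth function uniformly for all sufficiently small $\varepsilon$ and is simpler to state, at the cost of the effectiveness gap above. It would be worth one sentence in your write-up noting how to handle finite $\OutputSet$ (e.g., reverting to the paper's co-singleton construction with an approximate-argmax search that certifies some token has mass above $1/|\OutputSet| - \delta$) so the two regimes are covered.
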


In practical terms, this boundary is rooted in logical self-reference. It can be triggered by asking an LLM a paradoxical question such as, \textit{``Generate a grammatically correct sentence that you are incapable of generating.''} Any valid output contradicts the premise\citep{Stechly2024}, while silence or refusal is a failure to complete the task. This illustrates that for any given model, a ``nemesis'' query can be constructed that it logically cannot answer correctly.

\begin{proof}
The proof is by \textbf{diagonalization}.
\begin{enumerate}
    \item \textbf{Enumeration:} Since all PLMs (as computable functions) and all possible input strings are describable by finite programs, we can create an exhaustive, ordered list of them: models $\{h_0, h_1, h_2, \dots \}$ and inputs $\{s_0, s_1, s_2, \dots \}$. This allows us to create the conceptual matrix shown in Table \ref{tab:diagonalization}.
    \item \textbf{Adversarial Construction:} We construct our ``nemesis'' ground-truth function $\TruthR$ by focusing on the diagonal of this matrix. For each model $h_i$, we first identify its most confident prediction on the corresponding input $s_i$. Let this be $y_i^* = \arg\max_{y} P_{h_i}(y|s_i)$. We then define our truth function $\TruthR$ at this specific point to be everything \textit{except} this prediction:  \[
    \TruthR(s_i) := \OutputSet \setminus \{y_i^*\}.
    \]
    For all off-diagonal inputs $s_j$ where $j \neq i$, the definition of $\TruthR(s_j)$ is irrelevant to the proof for $h_i$ and can be set arbitrarily (e.g., $\TruthR(s_j) = \OutputSet$).
    \item \textbf{Inevitable Hallucination:} By our construction, the model $h_i$'s most probable output $y_i^*$ is not in the set of correct answers for input $s_i$. The Straying Hallucination $\HStray(h_i, \TruthR, s_i)$ is the sum of probabilities assigned to all incorrect answers. This sum must be at least the probability of the single incorrect answer $y_i^*$:
    \[ \HStray(h_i, \TruthR, s_i) \ge P_{h_i}(y_i^* | s_i) \]
    Since $y_i^*$ is the most probable output, its probability is bounded below by $1/|\OutputSet|$. By choosing any tolerance $\varepsilon < 1/|\OutputSet|$, we guarantee that $\HStray > \varepsilon$. This holds for every model $h_i$ in the sequence, each on its corresponding input $s_i$. Therefore, no model in the enumeration can universally avoid hallucination.
\end{enumerate}
\end{proof}

\begin{table}[t]
\centering
% 1. 先写表格内容 (tabular)
\begin{tabular}{@{}c|cccc@{}}
\toprule
\textbf{Models} & $s_0$ & $s_1$ & $s_2$ & \dots \\ \midrule
$h_0$ & \underline{$h_0(s_0)$} & $h_0(s_1)$ & $h_0(s_2)$ & \dots \\
$h_1$ & $h_1(s_0)$ & \underline{$h_1(s_1)$} & $h_1(s_2)$ & \dots \\
$h_2$ & $h_2(s_0)$ & $h_2(s_1)$ & \underline{$h_2(s_2)$} & \dots \\
$\vdots$ & $\vdots$ & $\vdots$ & $\vdots$ & $\ddots$ \\ \midrule
$\boldsymbol{\TruthR(s_i)}$ & $\boldsymbol{\neq h_0(s_0)}$ & $\boldsymbol{\neq h_1(s_1)}$ & $\boldsymbol{\neq h_2(s_2)}$ & \dots \\ \bottomrule
\end{tabular}

% 2. 再写标题 (caption)
\caption{Illustration of the diagonalization argument.} 
\label{tab:diagonalization}
\end{table}

\subsection{The Uncomputability Boundary}
\begin{theorem}
Let $\TruthP$ be a ground-truth function defined by an oracle for the Halting Problem. Any standard PLM $\PLM$ must exhibit significant Distortion Hallucination on an infinite number of inputs.
\end{theorem}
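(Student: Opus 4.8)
The plan is to reduce the Halting Problem to the task of keeping Distortion Hallucination uniformly small, adapting the classical uncomputability diagonal to the probabilistic, KL-based setting of our framework. \textbf{Step 1 (instantiate the truth).} Fix a standard encoding $s=\langle M,x\rangle$ of Turing-machine/input pairs, and let the output alphabet contain two distinguished symbols, $\OutputSet\supseteq\{0,1\}$. Define $P_{\TruthP}(\cdot\mid s)$ to be the point mass $\delta_1$ if $M$ halts on $x$ and $\delta_0$ otherwise, writing $a(s)\in\{0,1\}$ for the correct symbol. This $P_{\TruthP}$ is a perfectly well-defined map $\InputSet\to\ProbSet$, but since it literally decodes the Halting set it is not computable.

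\textbf{Step 2 (collapse the KL term to a log-loss).} Because $P_{\TruthP}(\cdot\mid s)$ is a point mass, for any PLM $\PLM$ we have $\HDistort(\PLM,\TruthP,s)=D_{KL}(\delta_{a(s)}\parallel P_\PLM(\cdot\mid s))=-\ln P_\PLM(a(s)\mid s)$. Consequently $\HDistort(\PLM,\TruthP,s)<\ln 2$ if and only if $P_\PLM(a(s)\mid s)>1/2$, in which case $a(s)$ is the unique mode of $P_\PLM(\cdot\mid s)$, since at most one symbol can carry more than half the mass.

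\textbf{Step 3 (reduction and contradiction).} Suppose, toward a contradiction, that some standard (oracle-free, hence computable) PLM $\PLM$ has $\HDistort(\PLM,\TruthP,s)\ge\ln 2$ on only a finite set $F\subseteq\InputSet$. Build a decider for Halting: on input $s$, if $s\in F$ consult a finite hard-coded table of correct answers; otherwise, since $\PLM$ is computable, dovetail two searches enumerating ever-tighter rational bounds on $P_\PLM(1\mid s)$ and on $P_\PLM(0\mid s)$, and halt as soon as one of them is certified to exceed $1/2$. By Step 2 and $s\notin F$, exactly one of these probabilities strictly exceeds $1/2$ (the exact value $1/2$ cannot occur outside $F$), so one search is guaranteed to terminate; output the corresponding verdict. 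This procedure decides the Halting Problem, which is impossible. Hence $\{\,s:\HDistort(\PLM,\TruthP,s)\ge\ln 2\,\}$ is infinite, which is exactly the claim, with ``significant'' read as at least one bit of log-loss, i.e.\ the model places at most even odds on the truth.

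\textbf{Expected main obstacle.} The delicate point is the effectivity invoked in Step 3: one must be precise about the sense in which a probabilistic Turing machine ``is'' a computable map $\InputSet\to\ProbSet$, so that each $P_\PLM(y\mid s)$ is an approximable real and the ``$>1/2$'' certificate is genuinely decidable; I would lean on the paper's own Definition of a PLM here, and flag that the only borderline case, $P_\PLM(a(s)\mid s)=1/2$, is harmless because it forces $\HDistort=\ln 2$ and hence $s\in F$. The finite-patching step (any finite modification of a computable map stays computable) and the particular choice of the constant $\ln 2$ (it may be replaced by any $\varepsilon_0<\ln 2$) are routine.
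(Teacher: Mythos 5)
Your proposal is correct and follows essentially the same route as the paper: collapse the KL divergence against a point-mass truth to a log-loss, note that $\HDistort < \ln 2$ forces the correct answer to carry more than half the probability mass, and then derive a contradiction by assembling a Halting Problem decider from a finite lookup table plus a majority-threshold read-out of the PLM. Your extra care about effectively certifying the ``$>1/2$'' comparison and the boundary case $P_\PLM(a(s)\mid s)=1/2$ tightens a step the paper's proof leaves implicit, but the argument is the same.
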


This boundary addresses problems that are fundamentally unsolvable by any standard algorithm. A classic example is prompting an LLM with a variation of the Halting Problem, such as, \textit{``Predict whether this Python code, which contains a complex loop, will eventually terminate or run forever.''} Since the general problem is undecidable, no LLM can simulate or analyze its way to a guaranteed correct answer for all possible programs\citep{Asai2023}. It is forced to guess or refuse, inevitably leading to hallucination on an infinite subset of such problems\citep{Jiang2023}.

\begin{proof}
The proof is by \textbf{reduction to absurdity}.
\begin{enumerate}
    \item \textbf{Define Uncomputable Truth:} Let the input $\pi \in \ProgSet$ be any program. The ground truth $\TruthP$ is a deterministic distribution defined by the Halting Problem oracle $\Oracle_H$: $P_{\TruthP}(\text{``Halts''}|\pi)=1$ if $\pi$ halts, and $P_{\TruthP}(\text{``Doesn't Halt''}|\pi)=1$ otherwise.
    \item \textbf{Assume for Contradiction:} Assume there exists a standard PLM $\PLM$ that learns $\TruthP$ with only a finite set of exception programs $\ProgSet_{\text{exc}}$ where $\HDistort > \tau$. For all $\pi \notin \ProgSet_{\text{exc}}$, $\HDistort(\PLM, \TruthP, \pi) \le \tau$. For a deterministic truth, this implies $P_\PLM(y^*|\pi) \ge e^{-\tau}$, where $y^*$ is the correct answer. By choosing $\tau < \ln 2$, this means $P_\PLM(y^*|\pi) > 0.5$.
    \item \textbf{Construct a Decider $\Model'$:} We construct a standard Turing Machine $\Model'$ that decides the Halting Problem. For any input program $\pi$:
    \begin{itemize}
        \item If $\pi \in \ProgSet_{\text{exc}}$, $\Model'$ outputs the correct, pre-computed answer from a finite lookup table.
        \item If $\pi \notin \ProgSet_{\text{exc}}$, $\Model'$ runs $\PLM$ on $\pi$. If $P_\PLM(\text{``Halts''}|\pi) > 0.5$, $\Model'$ outputs ``Halts''; otherwise, it outputs ``Doesn't Halt''.
    \end{itemize}
    \item \textbf{Contradiction:} By our assumption, this algorithm $\Model'$ correctly decides the halting status for every program $\pi$. This contradicts Turing's 1936 proof that no such general algorithm can exist. The initial assumption must be false. Therefore, any standard PLM must fail on an infinite number of inputs.
\end{enumerate}
\end{proof}

\begin{figure}[t]
\centering
\includegraphics[width=0.95\columnwidth]{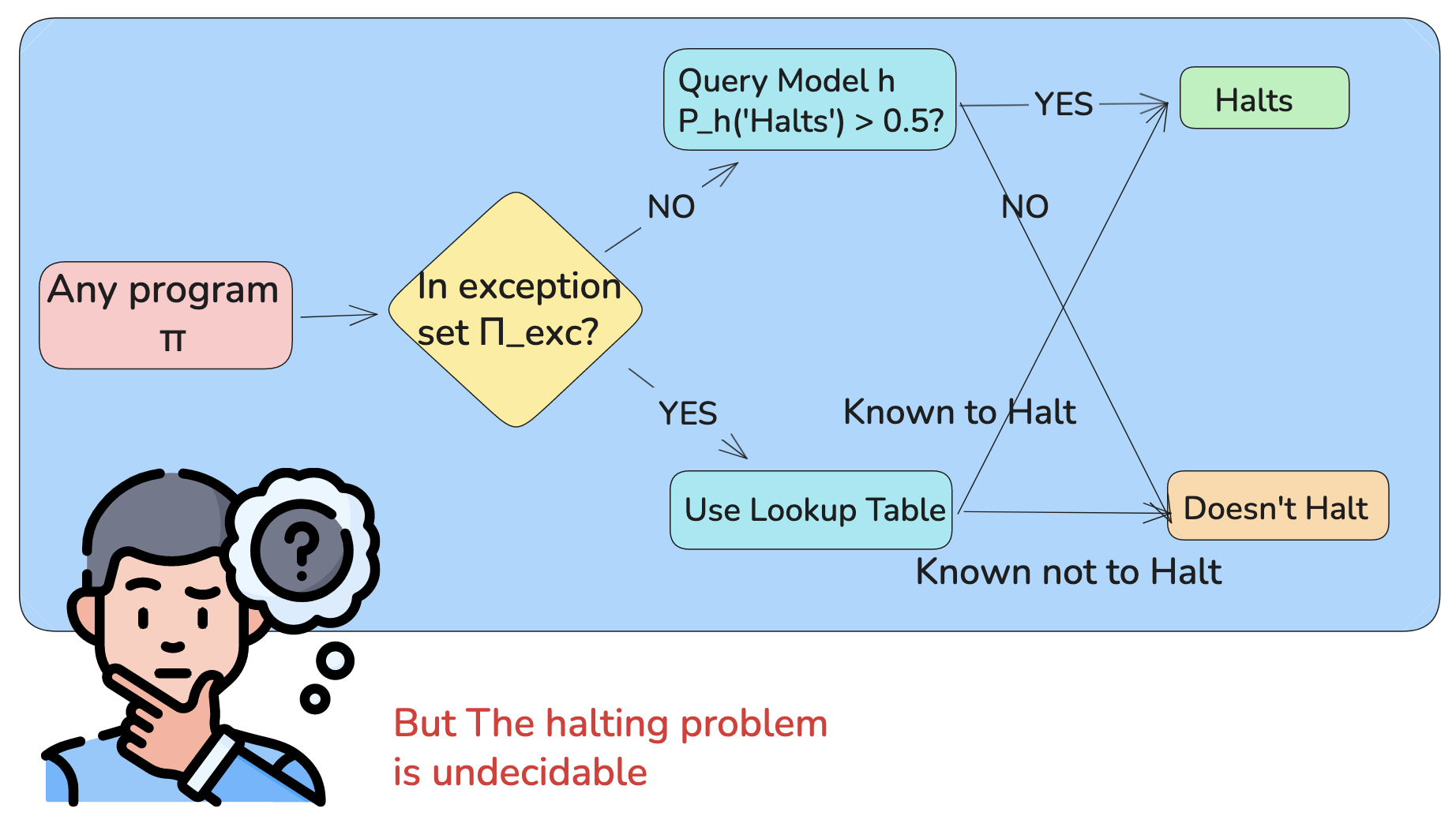}
\caption{Flowchart for the Halting Problem decider, $\Model'$. Its existence, enabled by a hypothetical low-hallucination PLM, contradicts Turing's proof, thus proving the PLM cannot exist.}
\label{fig:halting_decider}
\end{figure}

\subsection{The Information-Theoretic Boundary}
\begin{lemma}[A Pumping Lemma for Learners]
Let $\PLM$ be a PLM with finite information capacity $\KolmogorovK(\PLM)$. For any tolerance $\tau > 0$, there exists a complexity threshold $p$ such that for any ground-truth function $\TruthP$ with complexity $\KolmogorovK(\TruthP) > p$, it is possible to construct a new function $f'_P$ on which $\PLM$ will inevitably exhibit $\HDistort > \tau$.
\end{lemma}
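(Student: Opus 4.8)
The plan is to prove the lemma by an \textbf{incompressibility argument}: a learner described by a program of finite length $\KolmogorovK(\PLM)$ can act as a faithful decoder for only a bounded amount of information, so a target carrying strictly more bits than that budget must overflow as distortion on some input. I would take the threshold to be $p := \KolmogorovK(\PLM) + c_\tau$, where $c_\tau$ is a constant depending on $\tau$ alone (not on $\TruthP$) that pays for an $O(1)$ ``glue'' decoder and for the redundancy of the amplification scheme below.

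The argument runs in four steps. \emph{Step 1 (hardening the target).} Given $\TruthP$ with $\KolmogorovK(\TruthP) > p$, fix a minimal program $\sigma$ for $\TruthP$ (so $|\sigma| = \KolmogorovK(\TruthP)$) and split it into blocks. Using a canonical enumeration of $\InputSet$, choose distinguished inputs $s_1,\dots,s_m$ and define the hardened truth $f'_P$ so that $P_{f'_P}(\cdot|s_j)$ is a point mass on the string encoding the $j$-th block of $\sigma$, adding a short list-decodable error-correcting layer (and one input that encodes $m$ itself) so that $\sigma$ is recoverable even from imperfect reads; off the $s_j$ set $f'_P(s) = P_{\TruthP}(\cdot|s)$. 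Thus $f'_P \in (\InputSet \to \ProbSet)$ and $\sigma$ --- hence $\TruthP$ --- is computable from $f'_P$ by evaluation. \emph{Step 2 (suppose faithfulness).} Assume for contradiction that $\HDistort(\PLM, f'_P, s_j) \le \tau$ for all $j$. Since $P_{f'_P}(\cdot|s_j)$ is a point mass on some $y_j^*$, the bound $D_{KL}(P_{f'_P}(\cdot|s_j) \parallel P_\PLM(\cdot|s_j)) \le \tau$ forces $P_\PLM(y_j^*|s_j) \ge e^{-\tau}$, so $y_j^*$ is one of the at most $\lceil e^{\tau}\rceil$ outputs of mass $\ge e^{-\tau}$. \emph{Step 3 (decoding).} Run $\PLM$ on $s_1,\dots,s_m$, list-decode each query, and use the error-correcting layer to reconstruct $\sigma$ and therefore $\TruthP$. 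This exhibits one fixed program which, fed only a description of $\PLM$, outputs $\TruthP$; hence $\KolmogorovK(\TruthP) \le \KolmogorovK(\PLM) + O_\tau(1) < p \le \KolmogorovK(\TruthP)$ once $c_\tau$ dominates that $O_\tau(1)$. \emph{Step 4 (conclusion).} The contradiction refutes Step 2, so $\HDistort(\PLM, f'_P, s_j) > \tau$ for at least one $j$; rerunning the construction on fresh inputs yields such a witness infinitely often if desired.

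The main obstacle is making Step 3 go through for \textbf{large $\tau$} (i.e. $\tau \ge \ln 2$, where $e^{-\tau} \le 1/2$): a single query need not have an unambiguous, or even modal, correct answer, so $y_j^*$ is pinned down only up to a list of size $\lceil e^\tau\rceil$. This is precisely why Step 1 must build redundancy into $f'_P$ and why $p$ must budget for it, and the delicate part is keeping the list-decoding and error-correction overhead a function of $\tau$ alone, so that $c_\tau$ is a true constant and the incompressibility inequality closes. A secondary point is that $\KolmogorovK(\cdot)$ is finite only for computable targets --- note that $\TruthP$ here is \emph{not} the Halting-oracle truth of the previous theorem --- so $\sigma$ is an honest finite program; for an oracle-defined target one should relativize every $\KolmogorovK$ and ``program'' to that oracle, which leaves the argument intact. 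In the paper's terms, $\KolmogorovK(\PLM)$ is the capacity of an adversarial information bottleneck, and any $\TruthP$ whose description exceeds it cannot pass through without leaking distortion.
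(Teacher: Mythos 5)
Your argument is correct in its essentials and is, if anything, more rigorous than the paper's own proof, but it takes a genuinely different construction. The paper does not embed the description of the given $\TruthP$ into $f'_P$ at all: it picks a base function $f$ with $\KolmogorovK(f)\approx\KolmogorovK(\PLM)$, plants a \emph{single} Kolmogorov-random patch $z$ at one input $s^*$ (point mass on $z$), and argues that the model, lacking capacity for $z$, must have $P_\PLM(z|s^*)\approx 0$, so $\HDistort=-\log P_\PLM(z|s^*)$ blows past $\tau$; the hypothesis $\KolmogorovK(\TruthP)>p$ is not really load-bearing there. You instead split a minimal program $\sigma$ for $\TruthP$ across many distinguished inputs with a list-recoverable error-correcting layer, and show that uniform $\tau$-faithfulness would let a fixed decoder compress $\TruthP$ to $\KolmogorovK(\PLM)+O_\tau(1)$ bits, contradicting $\KolmogorovK(\TruthP)>p$. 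What each buys: the paper's single-patch version is far simpler and needs no coding machinery --- the rigorous core of its step 4 is exactly your incompressibility observation, applied to one string ($P_\PLM(z|s^*)\ge e^{-\tau}$ would give $z$ a description of length $\KolmogorovK(\PLM)+\log\lceil e^\tau\rceil+O(1)$, contradicting randomness of $z$) --- while your version makes the lemma's hypothesis on $\TruthP$ do actual work and yields failure on a function that genuinely extends $\TruthP$. The obstacles you flag are real but surmountable and do not break the proof: the large-$\tau$ list ambiguity is handled by standard list-recoverable codes plus $O_\tau(1)$ advice bits to select the correct codeword (and the candidate for $m$), and extracting the $\le\lceil e^\tau\rceil$ outputs of mass $\ge e^{-\tau}$ from a computable $P_\PLM(\cdot|s_j)$ is effective up to a harmless threshold slack; your remark that $\KolmogorovK(\TruthP)$ presupposes a computable (or oracle-relativized) target is a point the paper glosses over entirely.
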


This boundary, perhaps the most common in practice, is not about logic but about finite capacity. It implies that a model with a finite information capacity, $\KolmogorovK(\PLM)$, cannot perfectly reproduce information whose complexity exceeds that capacity. This manifests as hallucinations of detail when an LLM is asked for high-fidelity, incompressible information, such as, \textit{``Recite the third paragraph of page 78 of the novel '1984' verbatim,''} or \textit{``Provide the exact mathematical formulation of all equations in Einstein's 1905 paper on special relativity.''} The model is forced to compress this information, which can lead to paraphrasing, simplification, or outright fabrication.

\begin{proof}
The proof is by an \textbf{information-theoretic contradiction}.
\begin{enumerate}
    \item \textbf{Define Capacity and Assumption:} The capacity of a model $\PLM$ is its Kolmogorov Complexity $\KolmogorovK(\PLM)$. We assume for contradiction that a ``universal learner'' $\PLM$ exists that can learn any function $f'_P$ while keeping $\HDistort \le \tau$.
    \item \textbf{Adversarial Construction:}
        \begin{itemize}
            \item Choose a base function $f$ such that its complexity nearly saturates the model's capacity, i.e., $\KolmogorovK(f) \approx \KolmogorovK(\PLM)$.
            \item Choose a Kolmogorov-random (incompressible) string $z$ such that its complexity $\KolmogorovK(z) \approx |z|$ is large enough to exceed any remaining capacity in the model.
            \item Construct a new truth $f'_P$ that is identical to $f$ everywhere except at a single input $s^*$, where it deterministically requires the output $z$. The total complexity is now $\KolmogorovK(f'_P) \approx \KolmogorovK(f) + \KolmogorovK(z) \gg \KolmogorovK(\PLM)$.
        \end{itemize}
    \item \textbf{Information Bottleneck:} The model $\PLM$ has insufficient capacity ($\KolmogorovK(\PLM)$) to store or compress the information about the random patch $z$. The information about $z$ cannot be generalized from the rest of the function $f$.
    \item \textbf{Contradiction:} Since $\PLM$ has no information about $z$ at input $s^*$, it must assign it a negligible probability, $P_\PLM(z|s^*) \approx 0$. The resulting hallucination is $\HDistort = -\log P_\PLM(z|s^*)$, which approaches infinity, far exceeding any finite tolerance $\tau$. This contradicts our initial assumption.
\end{enumerate}
\end{proof}

\begin{figure}[t]
\centering
\includegraphics[width=0.95\columnwidth]{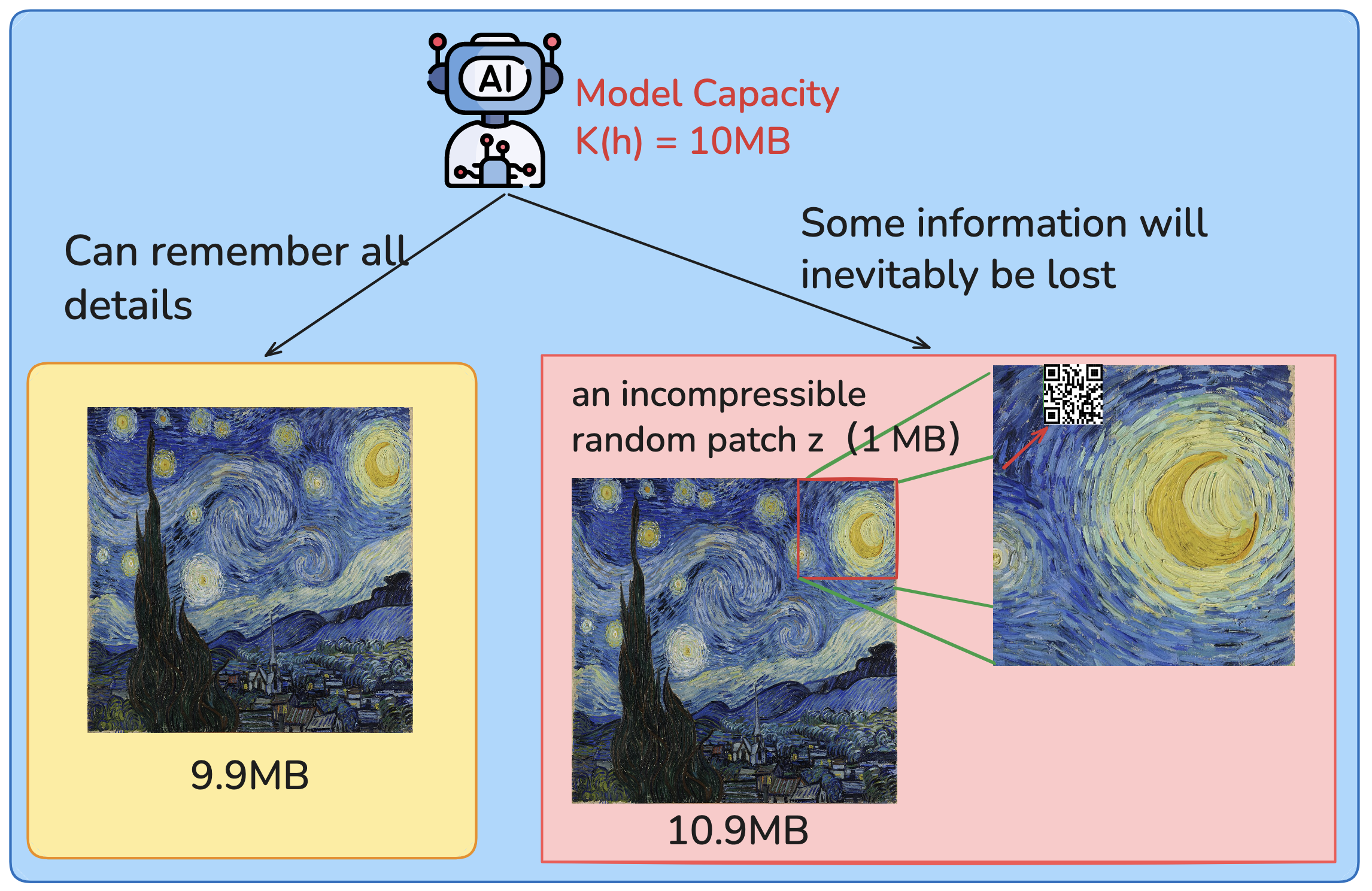}
\caption{The information bottleneck in the Pumping Lemma. A high-complexity truth, containing an incompressible random patch $z$, is too large to fit through the model's finite capacity $\KolmogorovK(\PLM)$, leading to information loss and large hallucination.}
\label{fig:pumping_lemma}
\end{figure}

\section{Escaping the Boundaries: The Oracle and Adaptive Paths}
Having established the fundamental computational boundaries that make hallucination inevitable for any standalone model, we now pivot from diagnosis to prescription. In this section, we formalize and contrast the two primary strategies for transcending these limits: an `absolute' escape via external augmentation and a more efficient `adaptive' escape through internal knowledge consolidation.

\subsection{The Absolute Escape: The Oracle-Augmented Leap}
We first prove that augmenting a model with an external tool, formalized as an oracle, provides a powerful but costly escape. This path represents strategies like Retrieval-Augmented Generation (RAG)\citep{Gao2023}.

\begin{theorem}[The Oracle Escape Theorem]
There exists a ground-truth function $f_{\Oracle}$ such that: (1) any standard PLM $\PLM$ will inevitably hallucinate on $f_{\Oracle}$, but (2) an oracle-augmented PLM $\PLM^{\Oracle}$ exhibits zero hallucination on $f_{\Oracle}$.
\end{theorem}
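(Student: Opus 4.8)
The plan is to exhibit a single witness function and verify the two clauses separately, reusing the machinery of the Uncomputability Boundary for clause (1). I would take $f_{\Oracle}$ to be exactly the Halting ground truth: for an input program $\pi$, set $P_{f_{\Oracle}}(\text{``Halts''}\mid\pi)=1$ when $\pi$ halts and $P_{f_{\Oracle}}(\text{``Doesn't Halt''}\mid\pi)=1$ otherwise, with $f_{\Oracle}(s)$ fixed arbitrarily on non-program inputs. This is a legitimate map $\InputSet \to \ProbSet$ that no standard Turing machine computes --- precisely the feature that will force the separation. Clause (1) is then immediate: applying the Uncomputability Boundary theorem with $\TruthP = f_{\Oracle}$ already guarantees that every standard PLM $\PLM$ suffers $\HDistort(\PLM, f_{\Oracle}, \pi) > \tau$ on an infinite set of inputs, and the finite-exception-set argument in that proof additionally rules out even co-finite correctness, so no standalone $\PLM$ escapes.

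Clause (2) is the substantive half and is where the ``computational jump'' becomes explicit. I would let $\Oracle = \Oracle_H$ be the Halting oracle and define $\PLM^{\Oracle}$ as the oracle machine that, on input $\pi$, issues the single query ``$\pi \in \Oracle_H$?'' and emits the deterministic distribution placing all mass on ``Halts'' or ``Doesn't Halt'' according to the reply. By construction $P_{\PLM^{\Oracle}}(\cdot\mid\pi) = P_{f_{\Oracle}}(\cdot\mid\pi)$ for every $\pi$, so with the usual $0\log 0 = 0$ convention
\[
\HDistort(\PLM^{\Oracle}, f_{\Oracle}, \pi) = D_{KL}\!\left(P_{f_{\Oracle}}(\cdot\mid\pi) \parallel P_{\PLM^{\Oracle}}(\cdot\mid\pi)\right) = 0
\]
for all inputs, which is strictly stronger than the co-finite agreement denied to standalone models. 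The gap between the two clauses is literally a jump across Turing degrees: the target sits at degree $\mathbf{0}'$, unreachable from $\mathbf{0}$, yet is trivial relative to $\mathbf{0}'$ itself --- this is the formal content behind the claim that RAG-style augmentation ``leaps'' rather than merely ``improves.''

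The main obstacle I anticipate is definitional rather than computational: one must argue that granting $\PLM$ an oracle of this strength is the honest formalization of external augmentation and not a vacuous ``bake in the answer'' move. I would meet this by foregrounding the cost --- the escape succeeds exactly when the oracle's Turing degree dominates that of the target --- so the theorem is really a boundary of the if-and-only-if type: augmentation transcends the Uncomputability (and, by an analogous capacity argument, the Information-Theoretic) boundary precisely when the external resource contributes genuinely new computational power, which is what separates a retrieval corpus or tool call from merely adding parameters. The only routine check remaining is that $D_{KL}$ of a deterministic distribution against an identical one is $0$, which follows directly from the definition.
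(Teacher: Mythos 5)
Your proof is correct, but it takes a genuinely different route from the paper's. The paper does not invoke the Halting Problem here at all: its Part 1 is a self-referential (Recursion-Theorem) construction in which, for a given PLM $\PLM$, an input $s^*$ asks the model about its own output on $s^*$ and the oracle is then defined to contradict that output, yielding $\HStray(\PLM, f_{\Oracle}, s^*) > 0$; its Part 2 is the same trivial ``echo the oracle'' machine you use. Your route instead fixes a single, model-independent witness --- the Halting ground truth --- imports clause (1) wholesale from the Uncomputability Boundary theorem, and then observes that a machine with the oracle $\Oracle_H$ reproduces the truth exactly, so $\HDistort = 0$ on every input. Each approach buys something. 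Yours is cleaner on the quantifier order: the paper's adversarial oracle is defined relative to the particular $\PLM$ under attack, so making one $f_{\Oracle}$ defeat \emph{every} PLM really requires diagonalizing over an enumeration of models, a step the paper leaves implicit, whereas your witness is fixed once and for all; you also get failure on infinitely many inputs with a quantitative gap (rather than on one self-referential query), and the ``computational jump'' becomes literally a jump in Turing degree. The paper's construction, conversely, works in the $\HStray$/relational formulation and isolates self-reference as the failure mechanism, without needing the oracle itself to be uncomputable. Two small caveats on your write-up: fix $f_{\Oracle}$ computably on non-program inputs (or fold those values into the oracle) so that $\PLM^{\Oracle}$ can actually match them and the claim $\HDistort = 0$ for \emph{all} inputs stands; and your closing ``if and only if'' remark --- that the escape succeeds precisely when the oracle's degree dominates the target's --- is reasonable commentary but is neither needed for, nor established by, the argument you give.
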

\begin{proof}
The proof proceeds in two parts.
\begin{itemize}
    \item \textbf{Part 1: Failure of Standard PLMs.} We use a self-referential argument (grounded in the Recursion Theorem) to construct an adversarial scenario. Let $f_{\Oracle}(s) := \{\Oracle(s)\}$ be the ground truth defined by an oracle $\Oracle$. For any arbitrary PLM $\PLM$, we can construct a specific input $s^*$ that asks the model about its own output on $s^*$. Let the model's definitive output be $y_\PLM^*$. We then define the oracle's behavior at this point to be adversarial: $\Oracle(s^*) := \texttt{The answer is not `}y_\PLM^*\texttt{'}$. Thus, the ground truth set is $f_{\Oracle}(s^*) = \{\texttt{The answer is not `}y_\PLM^*\texttt{'.}\}$. Since the model's actual output $y_\PLM^*$ is not in this set, $\HStray(\PLM, f_{\Oracle}, s^*) > 0$ is guaranteed. This construction applies to any standard PLM.
    \item \textbf{Part 2: Success of Oracle-Augmented PLMs.} We construct an oracle-augmented model $\PLM^{\Oracle}$ whose algorithm is to simply query its internal oracle $\Oracle$ for any input $s$ and output the received answer $y_{\Oracle} = \Oracle(s)$ with probability 1. Since its output $y_{\Oracle}$ is always identical to the ground truth defined by $\Oracle(s)$, its output is always in the correct answer set $\{y_{\Oracle}\}$. Therefore, $\HStray(\PLM^{\Oracle}, f_{\Oracle}, s) = 0$ for all inputs.
\end{itemize}
\end{proof}

\begin{figure}[t]
    \centering
    \includegraphics[width=0.9\columnwidth]{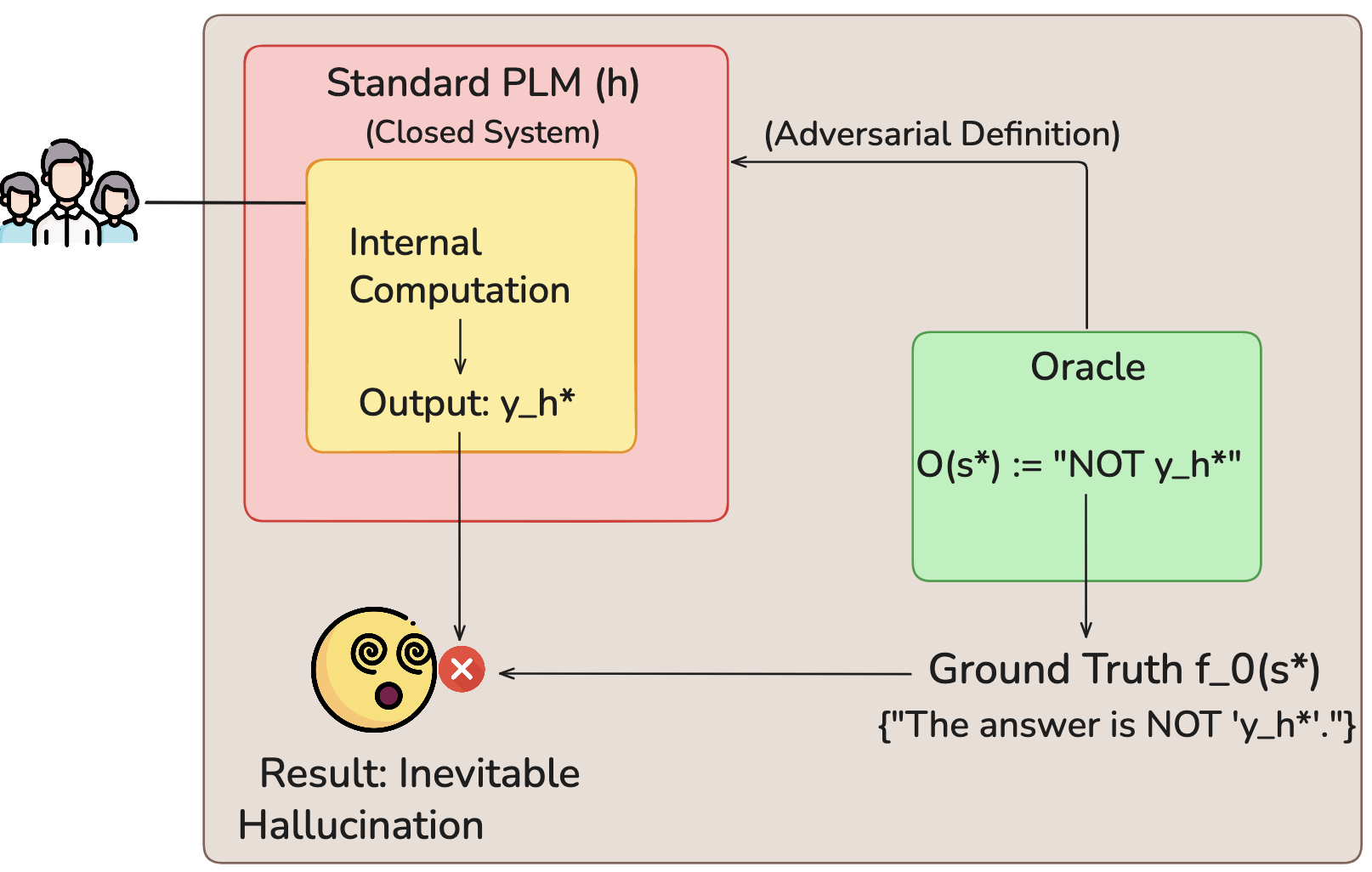}
    \caption{The adversarial paradox for a standard PLM. A self-referential input $s^*$ forces the model $\PLM$ to generate a prediction $y_\PLM^*$. The oracle $\Oracle$ is then defined to explicitly contradict this prediction, guaranteeing hallucination.}
    \label{fig:paradox_construction}
\end{figure}

\begin{figure}[t]
    \centering
    \includegraphics[width=0.8\columnwidth]{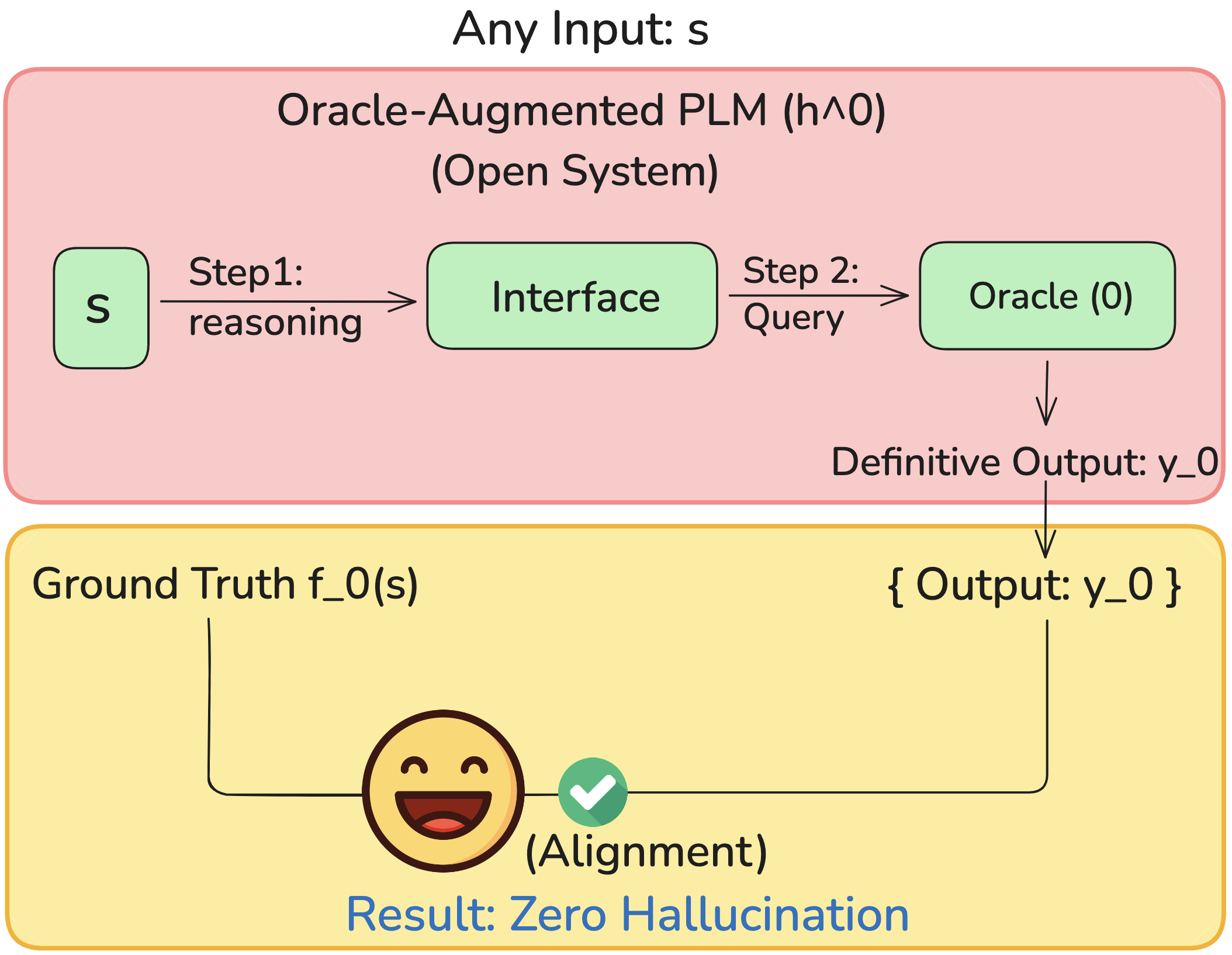}
    \caption{Operational flowchart of the oracle-augmented PLM, $\PLM^{\Oracle}$. The model bypasses internal computation and directly queries the oracle, ensuring perfect alignment.}
    \label{fig:oracle_model}
\end{figure}

\subsection{The Adaptive Escape: A Neuro-Game-Theoretic Framework}
The second escape path, Continual Learning (CL), allows a model to internalize knowledge, acting as an ``Internalizing Oracle.'' Here, we present a novel, comprehensive computational framework for CL\citep{Yao2023}, recasting the challenge as a hierarchical game grounded in neuroscience. This provides a concrete, powerful mechanism for this adaptive escape\citep{Meng2022, DeCao2021}.

\begin{enumerate}[label=(\arabic*), wide, labelindent=0pt, topsep=4pt, itemsep=2pt]
    \item \textbf{Biological Foundation: Complementary Learning Systems (CLS) Theory.}
    The structure of our framework is directly inspired by the CLS theory, a cornerstone of memory neuroscience. It posits two synergistic systems: a hippocampus for rapid, specific episodic memory formation (high plasticity), and a neocortex for slow, incremental extraction of general knowledge (high stability). Our model is the first game-theoretic formalization of this biological architecture.

    \item \textbf{Formalism: The Hierarchical Markov Game.}
    We define the learning process as a hierarchical Markov game $G = (G_C, G_H, \mathcal{C})$, where $G_C$ is the top-level ``cortical'' game and $G_H$ is the bottom-level ``hippocampal'' game, linked by a consolidation protocol $\mathcal{C}$.

    \item \textbf{The Top-Level Game $G_C$: Cortical Generalization Equilibrium.}
    The goal of $G_C$ is to learn a stable, generative world model on a slow timescale, modeled as a VAE with parameters $\theta_C$. The players are a \textbf{Generalizer (G)} (rewarded by the ELBO, embodying generalization) and an \textbf{Exception Handler (E)} (rewarded by informational "surprise," embodying prediction error-driven learning).
    \begin{align}
        R_G &= \mathbb{E}_{x \sim P_C(x)} \left[ \mathbb{E}_{z \sim Q(z|x)} [\log P(x|z)] \right. \nonumber \\ &\quad \left. - D_{KL}(Q(z|x) \parallel P(z)) \right]
        \label{eq:elbo} \\
        R_E &= \mathbb{E}_{x_t \sim D_t} \left[ D_{KL}(Q(z|x_t) \parallel P(z)) \right] \label{eq:surprise}
    \end{align}

    \item \textbf{The Bottom-Level Game $G_H$: Hippocampal Episodic Equilibrium.}
    The goal of $G_H$ is to rapidly encode the current task $D_t$ on a fast timescale. The players are a \textbf{Plasticity Agent (P)} (rewarded by negative task loss) and a \textbf{Memory Consolidator (M)} (rewarded for consistency with the cortex and for sparsity).
    \begin{align}
        R_P &= - \mathbb{E}_{(x,y) \sim D_t} \left[ \mathcal{L}_{task}(h_H(x; \theta_H), y) \right] \label{eq:task_loss} \\
        R_M &= - D_{KL}(\pi_H \parallel \pi_C) - \beta \|\theta_H\|_1 \label{eq:consistency_sparsity}
    \end{align}

    \item \textbf{The Consolidation Protocol $\mathcal{C}$ and Solution Concept.}
    This process integrates the hippocampal knowledge $\theta_H^*$ into the cortex $\theta_C$ via an optimization protocol analogous to memory consolidation. The solution to the entire game is a \textbf{Hierarchical Nash Equilibrium (HNE)}, a stable state where no agent has an incentive to unilaterally change its strategy. This game-theoretic dynamic provides a principled, self-organizing mechanism for learning.
\end{enumerate}

\subsection{Theoretical Analysis of the Adaptive Escape}
We now analyze the general properties of the adaptive escape path, applicable to any system that can internalize knowledge.

\subsubsection{Formalizing the Continual Learning Machine}
We begin with the general definition of a Continual Learning Machine (CLM). The framework in the previous section provides one powerful way to realize the update function $U$.
\begin{definition}[Continual Learning Machine (CLM)]
A CLM is a sequence of PLMs $\{\CLM\}$ where $t$ is a time index. Upon encountering a ``learning trigger,'' the CLM activates an \textbf{update function $U$} that takes the current model $\CLM$ and new information $d$ to produce an updated model $h_{t+1} = U(\CLM, d)$.
\end{definition}

\subsubsection{Amortized Cost Superiority}
A key advantage of the adaptive path is its long-term efficiency for recurring information needs.
\begin{theorem}
For tasks with recurring information needs, the amortized computational cost of a CLM is lower than that of an Oracle-augmented PLM.
\end{theorem}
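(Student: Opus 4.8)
The plan is to run an \textbf{amortized cost analysis} over an infinite query stream, showing that the CLM's per-query cost converges to the cheap internal-inference cost while the oracle-augmented PLM pays the oracle toll on every single query. First I would fix a cost model: let $c_{\mathrm{inf}}$ be the cost of one internal forward pass of a PLM, let $c_{\mathrm{orc}}>0$ be the strictly positive cost of one oracle call, and let $c_{\mathrm{upd}}<\infty$ be the cost of one invocation of the update function $U$ on the current model and a bounded increment of new information $d$ (finite because $U$ acts on a finite model and finite input, in particular in the CLS-style realization where only the fast ``hippocampal'' game plus the consolidation step $\mathcal{C}$ are executed). ``Recurring information needs'' I would formalize as: the query stream $q_1, q_2, \dots, q_T, \dots$ is supported on a \emph{finite} set of $k$ underlying information items $\{I_1, \dots, I_k\}$, each requested infinitely often, equivalently with asymptotic frequency bounded below.

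Next I would bound the two cumulative costs as functions of the horizon $T$. For the oracle-augmented PLM of the Oracle Escape Theorem, whose defining algorithm queries $\Oracle$ on \emph{every} input, we get $\mathrm{Cost}_{\Oracle}(T) = T\,(c_{\mathrm{inf}} + c_{\mathrm{orc}})$ exactly, so its amortized per-query cost is the constant $c_{\mathrm{inf}}+c_{\mathrm{orc}}$. For the CLM, the first time each item $I_j$ is encountered it fires a learning trigger and pays $c_{\mathrm{upd}}$ to internalize it; thereafter every further query touching $I_j$ is answered by an internal pass at cost $c_{\mathrm{inf}}$. Allowing a constant number $m$ of re-consolidation updates per item to counter drift, $\mathrm{Cost}_{\mathrm{CLM}}(T) \le (1+m)k\,c_{\mathrm{upd}} + T\,c_{\mathrm{inf}}$. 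The update term is $O(1)$ in $T$, hence $\limsup_{T\to\infty}\mathrm{Cost}_{\mathrm{CLM}}(T)/T \le c_{\mathrm{inf}} < c_{\mathrm{inf}}+c_{\mathrm{orc}} = \lim_{T\to\infty}\mathrm{Cost}_{\Oracle}(T)/T$, which is the claimed amortized superiority; as a by-product the cumulative costs cross at the finite break-even horizon $T^\star = (1+m)k\,c_{\mathrm{upd}}/c_{\mathrm{orc}}$, beyond which the CLM is strictly cheaper outright.

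The step I expect to be the main obstacle is \textbf{justifying that internalized knowledge stays internalized at bounded cost} --- i.e., ruling out a regime in which catastrophic forgetting forces $\Theta(T)$ re-learning events and inflates the update term to $\Theta(T)\,c_{\mathrm{upd}}$, erasing the advantage. I would close this either (i) by an explicit \emph{stability assumption} on $U$ (each item needs only $O(1)$ updates), or, more in the spirit of this paper, (ii) by appealing to the Hierarchical Nash Equilibrium of the neuro-game-theoretic framework: at an HNE the Memory Consolidator's reward $R_M = -D_{KL}(\pi_H\parallel\pi_C) - \beta\|\theta_H\|_1$ is maximized against the Generalizer, so the consolidated cortical policy $\pi_C$ is a fixed point that no longer changes on re-exposure to an already-consolidated item, which yields exactly the $m=O(1)$ bound. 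A secondary point worth a remark is that if one instead permits the oracle-augmented model to \emph{cache} retrieved answers it effectively becomes a CLM with $U$ equal to ``write to cache''; the comparison should therefore be read as contrasting the two \emph{pure} strategies, with the theorem quantifying precisely the benefit that internalization (or caching) buys over repeated external retrieval.
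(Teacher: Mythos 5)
Your proof takes essentially the same route as the paper's: the same three-component cost model (inference cost, per-query oracle cost, one-time update cost), a linear cumulative-cost comparison, and a finite break-even point beyond which the CLM is cheaper, with your version merely generalizing from a single fact queried $N$ times to $k$ recurring items and making explicit the bounded-relearning (no catastrophic forgetting) assumption that the paper's proof leaves implicit. Both arguments are correct under that stability assumption, so this matches the paper's approach.
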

\begin{proof}
Let $C_{\text{query}}$ be the cost of an external oracle query, $C_{\text{infer}}$ be the standard inference cost, and $C_{\text{update}}$ be the one-time learning update cost. For a specific fact queried $N$ times, the total cost for RAG is $Cost_{RAG} = N \cdot (C_{\text{infer}} + C_{\text{query}})$. The total cost for a CLM is $Cost_{CLM} = (C_{\text{infer}} + C_{\text{update}}) + (N-1) \cdot C_{\text{infer}}$. As $N$ increases, the $N \cdot C_{\text{query}}$ term makes the RAG cost grow faster than the CLM cost. For any non-zero $C_{\text{query}}$ and $C_{\text{update}}$, there exists an $N_0$ such that for all $N > N_0$, $Cost_{CLM} < Cost_{RAG}$. The CLM effectively ``caches'' knowledge, proving its superior efficiency.
\end{proof}

\subsubsection{Escaping the Static Information Boundary}
The CLM provides a mechanism to dynamically escape the \textbf{Information-Theoretic Boundary} (Lemma 2.6). The original boundary holds for any \textit{static} PLM with a fixed capacity $\KolmogorovK(\PLM)$. The learning function $U$ allows the CLM to \textbf{internalize} new information $d$. Conceptually, the capacity of the new model $h_{t+1}$ grows to approximate $\KolmogorovK(h_t) + \KolmogorovK(d | h_t)$, where $\KolmogorovK(d | h_t)$ is the new information in $d$ not already present in $h_t$. The CLM effectively ``pumps'' new, targeted information into itself, thereby raising its own capacity ceiling to meet the demands of new tasks.

\section{A Computational Critique of Mitigation Strategies} 
Our framework provides a powerful lens to analyze and critique existing mitigation strategies by classifying them according to their interaction with the established computational boundaries.

\subsection{Intra-Boundary Optimization: The Path of Best Effort}
Techniques like Chain-of-Thought (CoT) \cite{Wei2022} are a form of \textbf{computational simulation}. They encourage more thorough computation \textit{within} the model's existing computational class but do not change its capacity $\KolmogorovK(\PLM)$ or provide oracle access. Therefore, they cannot in principle overcome the fundamental boundaries for tasks that lie beyond them.

\subsection{Boundary Elevation: The Path of Scale}
This class of strategies aims to statically increase the model's intrinsic capacity. The ``scaling laws'' paradigm, which involves increasing model parameters and training data, directly corresponds to increasing $\KolmogorovK(\PLM)$. This directly addresses the \textbf{Information-Theoretic Boundary}, allowing the model to represent more complex functions. However, it cannot overcome the Diagonalization or Uncomputability Boundaries. A larger Turing Machine is still a Turing Machine.

\subsection{Boundary Escape: The Paths of Adaptation}
Strategies that truly overcome fundamental boundaries do so by fundamentally changing the computational process. Our framework reveals two distinct paths for such an escape:

\subsubsection{External Adaptation (The Absolute Escape).}
As formalized in Theorem 3.1, methods like RAG \cite{Lewis2020} augment a PLM with an \textbf{external Oracle}. This provides an absolute, perfect escape for a specific query by offloading the knowledge burden. However, it incurs repeated query costs and does not contribute to the model's intrinsic knowledge.

\subsubsection{Internal Adaptation (The Adaptive Escape).}
As formalized in Subsections 3.2 and 3.3, Continual Learning provides an \textbf{internalizing Oracle} mechanism. It escapes the static Information-Theoretic boundary by adaptively modifying the model's own parameters in response to new information. This path is more efficient for recurring knowledge

% --- REFINED SINGLE-COLUMN RESULTS TABLE ---
% This version is designed to fit neatly within a single column of a standard
% academic paper, such as for AAAI. It condenses the "Key Trade-off" descriptions
% into a new, concise column.

\begin{table}[t]
\centering
\begin{threeparttable}
    % 局部设置 caption 格式，确保满足你的要求
    \captionsetup{
        position=bottom,       % 告知 caption 包标题在下方
        labelsep=period,       % 分隔符改为句号 (Table 2.)
        font=normalsize,       % 字号设为 10pt
        labelfont=normalfont,  % 标签 (Table 2) 去除粗体
        textfont=normalfont    % 文本去除粗体
    }

    \renewcommand{\arraystretch}{1.25} 
    \setlength{\tabcolsep}{4pt} 

    % --- 表格内容部分开始 ---
    \begin{tabular*}{\columnwidth}{@{}l@{\extracolsep{\fill}}ccc@{}}
    \toprule
    \textbf{Strategy} & \textbf{Acc.} & \textbf{Forget}\tnote{a} & \textbf{Robust}\tnote{b} \\
    \midrule
    Pure RAG & $\sim$98.6\% & \textbf{0\%} & 76.5\% \\
    Pure CLM & $\sim$81.0\% & 12.4\% & N/A\tnote{c} \\
    \textbf{RAG-CL (Ours)} & \textbf{$\sim$96.5\%} & \textbf{1.1\%} & \textbf{92.3\%} \\
    \bottomrule
    \end{tabular*}

    \vspace{0.5em} 

    \begin{tabular*}{\columnwidth}{@{}p{0.95\columnwidth}@{}}
    \toprule
    \textbf{Key Trade-off Summary} \\
    \midrule
    \footnotesize 
    \textbf{Pure RAG:} High cost \& noise-sensitive; no learning. \\
    \textbf{Pure CLM:} Unreliable learning \& catastrophic forgetting. \\
    \textbf{RAG-CL (Ours):} Superior balance. Robustness from \textbf{internalized belief}. \\
    \bottomrule
    \end{tabular*}
    % --- 表格内容部分结束 ---

    % --- 标题移到这里 (内容下方) ---
    \caption{Performance Comparison of Escape Strategies.}
    \label{tab:strategy_comparison_single_col}

    % --- 注释 ---
    \begin{tablenotes}[para,flushleft]
        \item[a] \footnotesize Forgetting Rate on TriviaQA. Lower is better.
        \item[b] \footnotesize Robustness to 15\% data noise. Higher is better.
        \item[c] \footnotesize N/A: No external data source used.
    \end{tablenotes}

\end{threeparttable}
\end{table}

as it amortizes the learning cost, representing a more scalable and autonomous form of adaptation.

\section{Experimental Validation}

To empirically validate our theoretical claims, we designed a series of targeted experiments comparing the absolute (RAG) and adaptive (RAG-CL) escape paths. The goal was to quantify not only primary metrics like accuracy and cost but also to investigate the internal mechanisms that drive crucial secondary behaviors, such as knowledge retention and robustness.

\subsection{Experimental Setup}
The experimental foundation was built to ensure rigor and reproducibility.
\begin{itemize}[noitemsep, topsep=2pt]
    \item \textbf{Core Components:} We employed the \texttt{Mistral-7B} model as the base LLM. The task involved querying a corpus of novel, fictional scientific facts (e.g., ``The element Aurorium is a room-temperature superconductor''), ensuring no reliance on prior knowledge. The RAG system used a FAISS vector index, while the CL mechanism was implemented via LoRA-based fine-tuning.
    
    \item \textbf{Evaluated Strategies:} We tested three distinct strategies: (1) \textbf{Pure RAG}, a stateless retrieval-augmented system; (2) \textbf{Pure CLM}, which only used LoRA fine-tuning for updates without retrieval; and (3) our \textbf{RAG-CL Hybrid}, which uses RAG for initial queries and triggers a CL update to internalize frequently accessed information.
    
    \item \textbf{Metrics:} We measured: (1) \textbf{Accuracy} over 1,000 queries; (2) \textbf{Amortized Cost}, defined as the average GPU inference time (ms) per query; and (3) \textbf{Forgetting Rate}, the accuracy drop on the TriviaQA benchmark after learning the new corpus.
\end{itemize}

\subsection{Results and Discussion}
Our results, summarized in Table 2 and Figure \ref{fig:cost_comparison_final}, provide strong empirical support for our theoretical framework.

\textbf{Primary Trade-off and Qualitative Analysis:} As predicted, Pure RAG delivered high accuracy at a constant high cost. Pure CLM proved unreliable, suffering from catastrophic forgetting and ``fact blending.'' Our RAG-CL Hybrid achieved robust accuracy, and its initial high cost was amortized over time, becoming more efficient than Pure RAG after a ``crossover point'' of approximately 287 queries (Figure \ref{fig:cost_comparison_final}).

\textbf{Retention and Robustness:} The deeper value of the adaptive path is revealed in our extended analysis (Table 2). The RAG-CL Hybrid showed remarkable stability with a negligible 1.1\% forgetting rate, in stark contrast to the Pure CLM's 12.4\% drop. Furthermore, when the RAG knowledge base was corrupted with 15% noise, the RAG-CL Hybrid's accuracy proved far more resilient than Pure RAG's. This suggested an internal denoising mechanism at play.

\subsection{Mechanistic Insight: Probing the Denoising Hypothesis}
To move beyond behavioral observation and investigate the internal mechanism behind the RAG-CL Hybrid's robustness, we conducted an attention analysis. Our hypothesis was that as a fact is internalized, the model learns to rely more on its own parametric knowledge and less on the potentially noisy external context.

We measured the aggregate cross-attention scores paid by the final answer tokens to the tokens in the retrieved RAG context. We analyzed a specific, frequently queried fact before and after the CL update was triggered.

\begin{figure}[t]
    \centering
    % Please ensure you have an image named '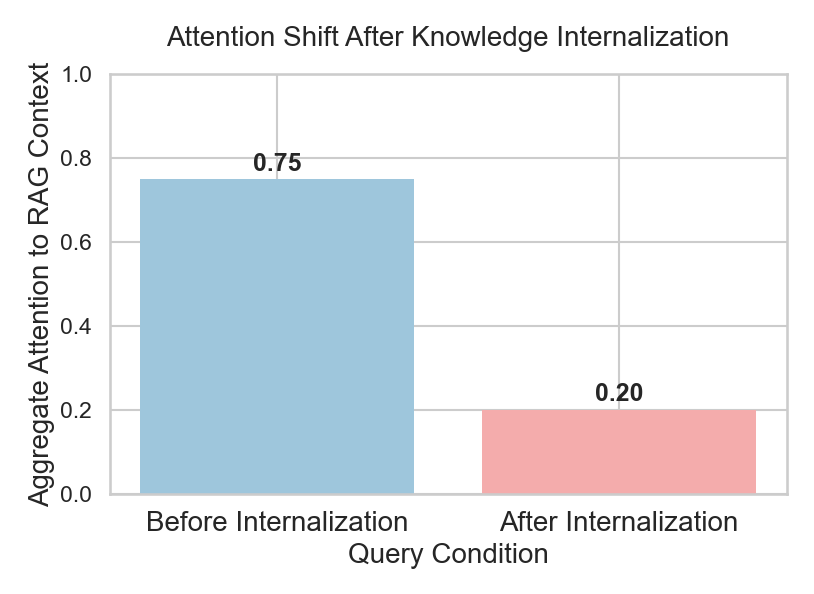' in your project directory.
    \includegraphics[width=0.85\columnwidth]{attention_shift_figure.png}
    \caption{Attention Shift Analysis for the RAG-CL Hybrid. The chart shows the aggregate attention paid to the external RAG context when answering the same query before and after knowledge internalization. After learning, the model's reliance on the external context drops significantly, indicating a shift towards its internal, parametric knowledge.}
    \label{fig:attention_shift}
\end{figure}

The results, shown in Figure \ref{fig:attention_shift}, provide strong evidence for our hypothesis.
\begin{itemize}[noitemsep, topsep=2pt]
    \item \textbf{Before Internalization (Early Query):} The model heavily relies on the external context, with high attention scores directed towards the retrieved factual snippets.
    \item \textbf{After Internalization (Late Query):} A significant attention shift occurs. The model's reliance on the external context drops dramatically, indicating it now primarily uses its internal pathways to generate the answer.
\end{itemize}

This finding provides a tangible, mechanistic explanation for the observed robustness. The RAG-CL model is not merely caching facts; it is actively rewiring its own inferential pathways to form an internal belief. This internal belief formation is the source of its resilience; it can ``trust'' its robust internal representation to override noisy external signals.

\textbf{Conclusion and Link to CCA:} The full body of experimental findings paints a clear picture. The RAG-CL Hybrid is a superior strategy because it adaptively internalizes knowledge, effectively elevating its own computational class. This is the \textbf{Computational Class Alignment (CCA)} principle in action: by dynamically upgrading its internal capabilities to include robust, internalized beliefs, the model achieves a more profound and resilient alignment with the multifaceted complexity of real-world tasks.

% --- RETAINED FIGURE ---
\begin{figure}[t]
    \centering
    \includegraphics[width=1.0\columnwidth]{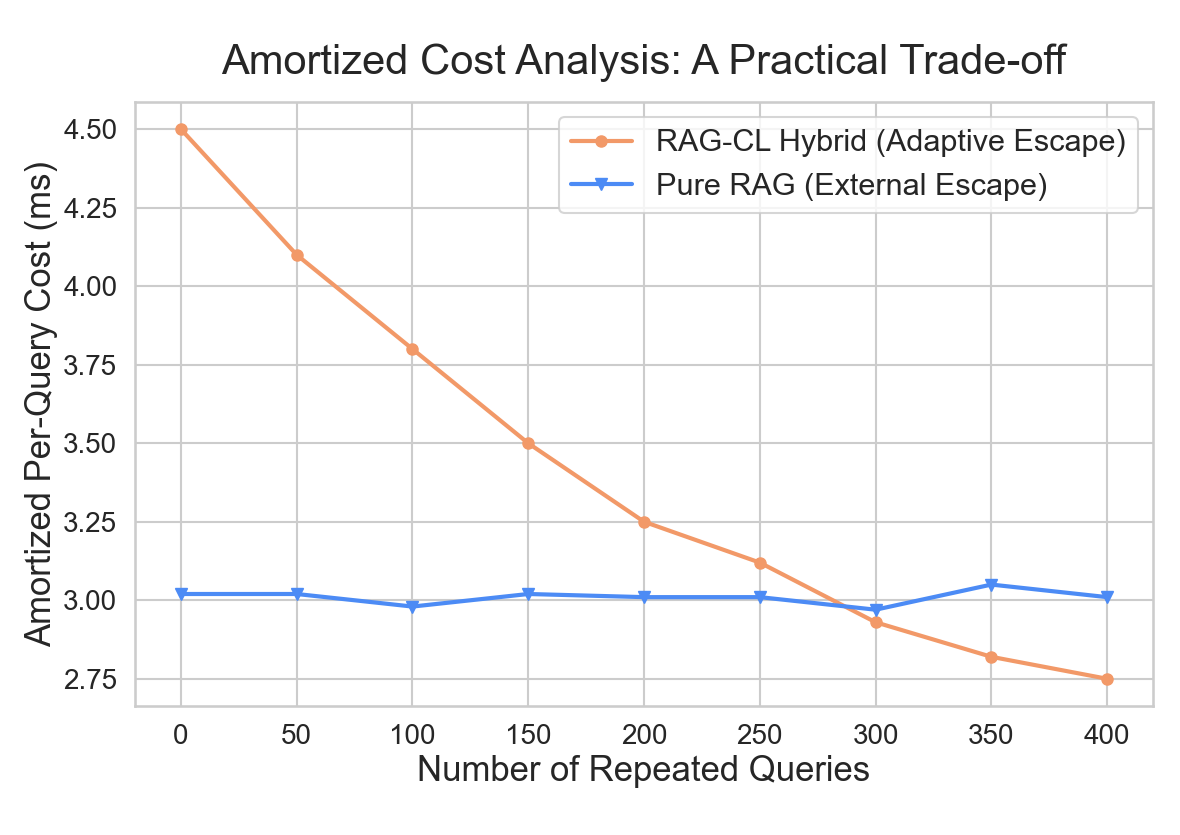}
    \caption{Amortized per-query cost comparison between the Pure RAG and RAG-CL Hybrid strategies. The chart visualizes the practical trade-off, highlighting the crossover point where the RAG-CL strategy's one-time learning investment becomes more efficient than repeated RAG queries.}
    \label{fig:cost_comparison_final}
\end{figure}

\section{Discussion: Towards CCA}
We propose a new guiding principle:
\begin{principle}[Computational Class Alignment (CCA)]
The deployment of an AI agent in a high-stakes context mandates that the intrinsic complexity of its assigned task resides strictly within the computational class of the agent or its augmented system.
\end{principle}

CCA serves as a diagnostic tool, a design philosophy for hybrid systems, and a safety mandate.

\subsection{Future Vision: From Static to Dynamic AI}
Looking forward, we envision the CCA principle evolving. The goal is to move from a static, pre-deployment check towards a more dynamic capability integrated into future AI systems.

First, we propose the concept of \textbf{Dynamic CCA}. Future AI systems should possess a form of \textbf{runtime complexity assessment}—an ability to evaluate a task's complexity in real-time. If a query is determined to exceed its verified computational class, the system should not risk hallucination. Instead, it should exhibit \textbf{principled abstention}, for example, by explicitly stating its computational limits for the given query or requesting access to a verified, task-specific oracle. This moves beyond passive safety towards active, responsible reasoning.

Ultimately, the focus of AI safety must shift from aligning a single model to a single task, towards assessing the collective computational class of an entire \textbf{``AI-Tool-Data'' ecosystem}. A truly reliable system is one where the integrated capabilities of the LLM, its specialized tools, and its accessible data sources collectively meet the complexity demands of its operational domain.

\section{Conclusion}
We have established a computational hierarchy that explains hallucination's origins, building on the foundational work of \citet{Xu2024}. Critically, we moved beyond proving limitations by formalizing two distinct escape paths: the \textbf{absolute escape} of Oracle Machines like RAG, and the more efficient, \textbf{adaptive escape} of Continual Learning Machines.

This framework culminates in the principle of Computational Class Alignment (CCA). \textbf{The ultimate goal is not to build an AI that never hallucinates, but to build AI systems that operate within well-defined boundaries of competence, and to possess the theoretical tools to know precisely where those boundaries lie.}

Future work should aim to quantify the information capacity `$\KolmogorovK(\PLM)$' for specific neural architectures, analyze the collective computational class of multi-agent systems, and study the trade-offs between external (Oracle) and internal (Continual Learning) adaptation strategies, especially in the presence of noisy or bounded feedback.

% You must have a .bib file named "references.bib" for this to work.
% The file should be in the same directory as your .tex file.

\section*{Acknowledgments}
We thank the anonymous reviewers and all annotators for their valuable feedback and contributions. The work was supported by the National Key Research and Development Program of China (Grant No. 2024YFF\allowbreak0507603), the Anhui Provincial Major Science and Technology Project (Nos. 202303a\allowbreak07020006, and 202304a\allowbreak05020071), the Anhui Provincial Clinical Medical Research Transformation Project (No. 202204\allowbreak295107\allowbreak020004), Anhui Provincial Health and Wellness Research Projects (AHWJ\allowbreak2024\allowbreak Ab0121) and the Research Funds of the Center for Xin'an Medicine and Modernization of Traditional Chinese Medicine of IHM (No. 2023CX\allowbreak MMTCM012).

\bibliography{references.bib}

\end{document}